\documentclass[11pt]{article}
\usepackage[utf8]{inputenc}
\usepackage[margin=1in]{geometry}
\usepackage{graphicx}
\usepackage{amsmath,amssymb,amsthm}
\newtheorem{proposition}{Proposition}
\usepackage{bm}
\usepackage[pdfencoding=auto,psdextra]{hyperref}
\usepackage{booktabs}
\usepackage{xcolor}
\usepackage{authblk}
\usepackage{tikz}
\usepackage{multirow}
\usetikzlibrary{positioning, arrows.meta, fit, backgrounds, calc, decorations.pathreplacing}

\newcommand{\gap}[1]{{\scriptsize(#1\%)}}
\newcommand{\gappp}[1]{{\scriptsize(#1\,pp)}}

\title{Boltzmann Attention: Learnable Ising Couplings for Cooperative Attention}

\author[1]{Gilhan Kim\thanks{gilhan.kim@yonsei.ac.kr}}
\author[1,2,3]{Daniel K. Park\thanks{dkd.park@yonsei.ac.kr (corresponding author)}}
\affil[1]{Department of Statistics and Data Science, Yonsei University, Seoul 03722, Republic of Korea}
\affil[2]{Department of Applied Statistics, Yonsei University, Seoul 03722, Republic of Korea}
\affil[3]{Department of Quantum Information, Yonsei University, Seoul 03722, Republic of Korea}

\date{}

\begin{document}

\maketitle

\begin{abstract}
Attention mechanisms are central to modern sequence models, yet standard attention computes relevance primarily through individual query--key similarities. Although softmax normalization introduces competition among positions, a standard attention layer does not explicitly parameterize learnable interactions between attention decisions. This limits its ability to directly model cooperative or antagonistic co-attention structure within the attention mechanism itself. We propose Boltzmann attention, an energy-based generalization in which attention patterns are governed by an interacting Ising model. The method augments the usual data-dependent local fields with learnable pairwise couplings, allowing the model to represent inter-position correlations beyond those captured by softmax or sigmoid attention. Experiments on character-level language modeling and synthetic bracket matching show that Boltzmann attention consistently improves over standard softmax attention within a standard Transformer architecture, with the advantage becoming more pronounced as sequence length increases. A four-way ablation confirms that the improvement arises from the learnable pairwise couplings. These results suggest that explicit inter-position interactions provide a principled enhancement for attention-based sequence modeling. Moreover, the Ising formulation opens a natural path toward quantum-computing-based sampling strategies: we demonstrate that diabatic quantum annealing provides a practical training method while maintaining competitive performance with exact Boltzmann computation.

\end{abstract}

\noindent\textbf{Keywords:} Attention mechanism, Energy-based model, Ising model, Boltzmann machine, Transformer, Statistical physics

\section{Introduction}
\label{sec:intro}
Attention mechanisms have become a central primitive of modern artificial intelligence, underlying Transformers and many large-scale language, vision, and multimodal models~\cite{vaswani2017attention,devlin2019bert,dosovitskiy2021image,radford2021learning}. By adaptively routing information across tokens, attention enables flexible context-dependent representations and long-range dependency modeling. While much effort has focused on improving the computational efficiency of attention, its representational structure remains equally important: the way attention parameterizes relationships among positions directly shapes what forms of dependency can be expressed within a layer. In this work, we focus on this representational aspect and ask whether attention can be enriched by explicitly modeling interactions among the attention decisions themselves.

Standard Transformer attention~\cite{vaswani2017attention}, building on the neural attention introduced by Bahdanau et al.~\cite{bahdanau2015neural}, computes attention weights from query--key similarities. For a query vector $\mathbf{q}_i$ at position $i$ and a key vector $\mathbf{k}_j$ at position $j$, the raw attention score is $h_{ij}=\mathbf{q}_i\cdot\mathbf{k}_j/\sqrt{d_k}$,
where $d_k$ is the key dimension. The normalized attention weight assigned from position $i$ to position $j$ is then
\begin{equation}
    \alpha_{ij} = \frac{\exp(h_{ij})}{\sum_l \exp(h_{il})}
    =
    \frac{\exp(\mathbf{q}_i \cdot \mathbf{k}_j / \sqrt{d_k})}{\sum_{l} \exp(\mathbf{q}_i \cdot \mathbf{k}_l / \sqrt{d_k})}.
    \label{eq:softmax_attn}
\end{equation}
The denominator introduces competition among positions: increasing one raw score increases its own normalized attention weight while decreasing the weights assigned to the others.
In statistical physics language, this corresponds to an energy function with local fields $h_{ij}$ but no spin--spin couplings ($J=0$)~\cite{baxter1982exactly}. Thus, softmax attention induces competition through normalization, but not through learnable pairwise couplings between attention decisions. Sigmoid attention~\cite{ramapuram2024sigmoid}, which replaces global normalization with elementwise $\sigma(2h_{ij})$, provides the clean $J=0$ limit: each position responds independently to its local field, with no pairwise coupling between positions.

The absence of pairwise couplings is a structural limitation. In natural language and structured sequence tasks, relevance is often relational: attending to a subject may make its verb more relevant, and attending to an opening bracket may require attending to its matching closing bracket. Multi-head attention does not resolve this limitation: although it runs several independent heads in parallel, each head still computes attention weights position-by-position with no learnable coupling between positions. Stacking multiple layers can partially compensate, since the output of one layer feeds into the next, but this compensation is indirect---the inter-position correlations must be reconstructed through successive layers of representation mixing rather than being explicitly parameterized within the attention mechanism itself.

We address this limitation by formulating attention as an interacting spin system. We assign a binary spin $s_j \in \{-1,+1\}$ to each key position $j$, representing attend ($+1$) or ignore ($-1$). In the Ising model, pairwise couplings $J_{jk}$ between spins create correlations that no external field alone can produce: when $J_{jk} > 0$, attending to position $j$ directly increases the probability of attending to position $k$, and vice versa. The resulting energy function is:
\begin{equation}
    E_i(\mathbf{s}) = -\sum_j h_{ij}\, s_j - \sum_{j<k} J_{jk}\, s_j s_k,
    \label{eq:ising_energy}
\end{equation}
where the local fields $h_{ij}$ come from query--key similarity (as in softmax) and the couplings $J_{jk}$ are learnable parameters encoding inter-position co-attention structure. 
Attention weights are obtained from the marginal spin magnetizations under the corresponding Boltzmann distribution, with the full construction given in Section~\ref{sec:model}.

The connection between attention and statistical physics has been explored from multiple angles. The Hopfield--attention framework~\cite{hopfield1982neural,demircigil2017model,ramsauer2020hopfield} showed that softmax attention corresponds to a one-step energy minimization in a modern Hopfield network, and subsequent works~\cite{hoover2023energy,ota2023attnbm} extended this perspective to iterative energy descent and Boltzmann machines~\cite{ackley1985learning,hinton2002training}. More recently, Poc-L{\'o}pez and Aguilera~\cite{poclopez2024dynamical} applied dynamical mean-field theory from statistical physics to characterize large self-attention networks. Independently, recent work on sigmoid attention~\cite{ramapuram2024sigmoid,yan2025sigmoid} demonstrated that replacing softmax with elementwise sigmoid can match or exceed softmax performance at scale. These lines of research have developed largely in parallel. Physics-based interpretations of softmax attention have remained largely in the non-interacting regime ($J=0$), without introducing learnable couplings, while the sigmoid-attention literature provides strong theoretical and practical advantages, such as improved regularity and hardware efficiency, without an explicit statistical-physics interpretation.

These observations lead to a simple unifying picture. Softmax and sigmoid are structurally different---categorical normalization versus independent binary decisions---but both lie in the $J{=}0$ regime. Boltzmann attention moves beyond this regime by adding learnable inter-position couplings $J\neq0$, creating correlations that neither can represent, analogous to the transition from an ideal paramagnet to an interacting spin system.

Introducing $J$ turns attention into a genuine many-body physics problem. The partition function $Z_i$ involves a sum over $2^T$ spin configurations, where $T$ is the attention window size, that cannot be factorized. Notably, the Ising model underlying Boltzmann attention is precisely the problem class targeted by quantum annealing~\cite{kadowaki1998quantum,johnson2011quantum} and related quantum sampling techniques. Recent work on diabatic quantum annealing (DQA)~\cite{gyhm2024boltzmann,kim2026diabatic,kim2026BMVAE} has demonstrated that Boltzmann machines can be efficiently trained using hardware-native Ising samples, providing a concrete path toward scaling Boltzmann attention beyond regimes that are tractable for classical computation. We return to this possibility in Section~\ref{sec:dqa}, where we demonstrate proof-of-principle DQA-based training.

Our contributions are threefold. First, we propose \emph{Boltzmann attention}, an Ising-based generalization of standard attention that introduces learnable inter-position couplings directly into the attention distribution. Second, we show empirically that learnable couplings improve sequence modeling performance within a standard Transformer architecture, with gains that become more pronounced as sequence length increases: Boltzmann attention improves over softmax by up to $+1.08\%$ in perplexity on Tiny Shakespeare and by $+2.89$\,pp in accuracy on bracket matching at $T=16$. Third, through a four-way ablation comparing softmax, $h{+}J$, $h$-only, and $J$-only variants, we isolate the pairwise couplings $J$ as the source of the improvement. We further demonstrate proof-of-principle DQA-based training, showing that quantum-sampling-based Boltzmann inference can replace exact enumeration while maintaining competitive performance, thereby providing a scalable route beyond the small-$T$ regime.


\section{Related Work}
\label{sec:related}

Several prior works have connected attention to energy-based or Ising models. We organize the comparison along four axes: (i)~whether the model introduces pairwise couplings $J$ between attention variables, (ii)~whether $J$ is freely learnable or derived from other quantities, (iii)~how the resulting distribution is solved, and (iv)~whether the model improves over softmax.

The connection between attention and energy-based models originates in the Hopfield network literature. The original Hopfield network~\cite{hopfield1982neural} introduced energy-based associative memory with binary neurons, and Demircigil et al.~\cite{demircigil2017model} extended it to exponential storage capacity. Ramsauer et al.~\cite{ramsauer2020hopfield} showed that softmax attention is a one-step energy minimization in this modern Hopfield network. Krotov and Hopfield~\cite{krotov2020large} unified dense associative memory models with higher-order interactions through a biologically plausible framework. The Energy Transformer~\cite{hoover2023energy} iterates the energy descent across the full Transformer block. Ota and Karakida~\cite{ota2023attnbm} (AttnBM) recast the Hopfield energy as a Boltzmann machine with exact analytic inference. In the language of our framework, all these models remain in the \emph{non-interacting} regime ($J = 0$): they model interactions between the query and keys, but not between key positions. The $J{=}0$ bottleneck identified in Section~\ref{sec:model} is therefore present in all of them.

Recent work on sigmoid attention~\cite{ramapuram2024sigmoid,yan2025sigmoid} demonstrated that replacing softmax with elementwise sigmoid---$\alpha_{ij} = \sigma(2h_{ij})$---can match or exceed softmax performance at scale, while offering improved regularity and hardware-efficient implementations. These are significant practical advantages that make sigmoid attention an attractive alternative to softmax. In our framework, however, sigmoid attention corresponds precisely to the $J = 0$ Ising model: each spin responds to its local field independently. Thus, while sigmoid removes the categorical constraint of softmax, the absence of learnable pairwise couplings---and hence the inability to parameterize inter-position correlations---remains intact.

A separate line of work reduces the $O(T^2)$ cost of softmax attention through sparsity~\cite{child2019generating}, hashing~\cite{kitaev2020reformer}, or kernel approximations~\cite{katharopoulos2020transformers,choromanski2021rethinking}. These methods modify \emph{which} positions interact but retain the non-interacting attention structure ($J = 0$). Our work addresses a different dimension: we enrich the \emph{nature} of interactions (from independent to cooperative) rather than reducing their computational cost.

Two recent works introduce pairwise couplings into attention but differ from ours in critical ways (Table~\ref{tab:comparison}). QAMA~\cite{du2025qama} reformulates attention as a quadratic unconstrained binary optimization (QUBO) problem with couplings derived from query--key similarity scores and local fields derived from value vectors through a learnable projection. Because $J$ is a fixed function of query--key scores rather than a freely learnable parameter, it cannot encode structural priors beyond what the scores already contain; on image classification benchmarks, QAMA reports accuracy up to 2.7 percentage points below standard softmax attention. The Spin-Model Transformer~\cite{mcbal2023spin} proposes continuous vector spins with couplings computed from query--key transformations rather than learned independently, and uses mean-field inference; no improvement over softmax is reported. Our approach combines all three necessary ingredients: \emph{freely} learnable $J$ (independent of the input), data-dependent $h$ from query--key similarity, and exact inference that preserves correlations.

\begin{table}[t]
    \centering
    \caption{Comparison of energy-based attention models. In the $J$ and $h$ source columns, QK denotes derivation from query--key products, and V denotes derivation from value vectors.}
    \label{tab:comparison}
    \begin{tabular*}{\linewidth}{@{\extracolsep{\fill}}lcccl}
        \toprule
        Model & $J$ source & $h$ source & Inference & Reported result \\
        \midrule
        Hopfield Attn~\cite{ramsauer2020hopfield} & none & QK & analytic & matches softmax \\
        Energy Tr.~\cite{hoover2023energy} & none & QK & iterative & matches softmax \\
        AttnBM~\cite{ota2023attnbm} & none & QK & analytic & matches softmax \\
        Sigmoid~\cite{ramapuram2024sigmoid} & none & QK & independent & $\approx$ softmax at scale \\
        QAMA~\cite{du2025qama} & from QK & from V & QUBO & $-$2.7\,pp vs.\ softmax \\
        Spin-Model~\cite{mcbal2023spin} & from QK & QK & Mean field & no improvement \\
        \textbf{Ours} & \textbf{learnable} & \textbf{QK} & \textbf{exact} & \textbf{+2.89\,pp} (bracket) \\
        \bottomrule
    \end{tabular*}
\end{table}

\section{Boltzmann Attention}
\label{sec:model}

\begin{figure*}[t]
\centering
\begin{tikzpicture}[
    >=Stealth,
    scale=0.78, every node/.style={transform shape},
    block/.style={draw, rounded corners, minimum height=0.9cm, minimum width=2.8cm,
                  font=\small, align=center, fill=white},
    attnblock/.style={block, fill=blue!8},
    boltzblock/.style={block, fill=red!12, draw=red!60!black, thick},
    arrow/.style={->, thick},
]

\node[font=\bfseries] (title_std) at (0, 6.2) {Standard Transformer};

\node[block] (emb_s) at (0, 5.2) {Token + Position Emb};
\node[attnblock] (attn_s) at (0, 3.8) {Attention\\(Softmax)};
\node[block] (ffn_s) at (0, 2.2) {Feed-Forward\\Network};
\node[block] (head_s) at (0, 0.6) {Output Head\\(Softmax)};

\draw[arrow] (emb_s) -- (attn_s);
\draw[arrow] (attn_s) -- (ffn_s);
\draw[arrow] (ffn_s) -- (head_s);

\draw[->, thick, gray!50] (emb_s.east) -- ++(0.6,0) |- ($(attn_s.south)!0.5!(ffn_s.north)$);
\draw[->, thick, gray!50] ($(attn_s.south)!0.5!(ffn_s.north)$) -- ++(2.4,0) |- ($(ffn_s.south)!0.5!(head_s.north)$);

\node[draw, dashed, rounded corners, fill=blue!4, minimum width=4.8cm, minimum height=2.8cm,
      font=\small, align=center] (zoom_s) at (-4.5, 3.8) {};
\node[font=\footnotesize\bfseries, blue!60!black] at (-4.5, 4.9) {Softmax Attention};
\node[font=\footnotesize, align=left] at (-4.5, 4.2) {$h_{ij} = \mathbf{q}_i \!\cdot\! \mathbf{k}_j / \sqrt{d_k}$};
\node[font=\footnotesize, align=left] at (-4.5, 3.6) {$\alpha_{ij} = \mathrm{softmax}_j(h_{ij})$};
\node[font=\footnotesize, red!60!black] at (-4.5, 3.0) {$J = 0$ (no learnable couplings)};

\draw[->, dashed, gray] (attn_s.west) -- (zoom_s.east);

\node[font=\bfseries] (title_b) at (7, 6.2) {Boltzmann Transformer};

\node[block] (emb_b) at (7, 5.2) {Token + Position Emb};
\node[boltzblock] (attn_b) at (7, 3.8) {Attention\\(\textbf{Boltzmann})};
\node[block] (ffn_b) at (7, 2.2) {Feed-Forward\\Network};
\node[block] (head_b) at (7, 0.6) {Output Head\\(Softmax)};

\draw[arrow] (emb_b) -- (attn_b);
\draw[arrow] (attn_b) -- (ffn_b);
\draw[arrow] (ffn_b) -- (head_b);

\draw[->, thick, gray!50] (emb_b.west) -- ++(-0.6,0) |- ($(attn_b.south)!0.5!(ffn_b.north)$);
\draw[->, thick, gray!50] ($(attn_b.south)!0.5!(ffn_b.north)$) -- ++(-2.4,0) |- ($(ffn_b.south)!0.5!(head_b.north)$);

\node[draw, dashed, rounded corners, fill=red!4, minimum width=4.8cm, minimum height=2.8cm,
      font=\small, align=center] (zoom_b) at (11.5, 3.8) {};
\node[font=\footnotesize\bfseries, red!60!black] at (11.5, 4.9) {Boltzmann Attention};
\node[font=\footnotesize, align=left] at (11.5, 4.2) {$h_{ij} = \mathbf{q}_i \!\cdot\! \mathbf{k}_j / \sqrt{d_k}$};
\node[font=\footnotesize, align=left] at (11.5, 3.6) {$E = -\!\sum_j h_{ij} s_j - \!\sum_{j<k} J_{jk} s_j s_k$};
\node[font=\footnotesize, align=left, text=red!70!black] at (11.5, 3.0) {$\alpha_{ij} = (\langle s_j \rangle + 1)/2$};

\draw[->, dashed, gray] (attn_b.east) -- (zoom_b.west);

\draw[->, ultra thick, red!60!black, dashed] (attn_s.east) -- (attn_b.west)
    node[midway, above, font=\footnotesize\bfseries, red!60!black] {Replace};

\end{tikzpicture}
\caption{Architecture overview. Left: standard Transformer with softmax attention. Right: Boltzmann Transformer, where only the attention mechanism is replaced by an Ising model with learnable couplings $J$. All other components remain identical.}
\label{fig:architecture}
\end{figure*}

\subsection{Standard Attention as Non-Interacting Spins}

In standard multi-head attention with $H$ heads, the input sequence $\mathbf{x} \in \mathbb{R}^{T \times d_{\mathrm{model}}}$, where $d_{\mathrm{model}}$ is the embedding dimension, is projected into queries, keys, and values through learned weight matrices $W_Q, W_K, W_V \in \mathbb{R}^{d_{\mathrm{model}} \times d_k}$, with per-head dimension $d_k = d_{\mathrm{model}} / H$. Each head computes:
\begin{align}
    \mathbf{q} &= \mathbf{x} W_Q, \quad \mathbf{k} = \mathbf{x} W_K, \quad \mathbf{v} = \mathbf{x} W_V, \\
    \alpha_{ij} &= \mathrm{softmax}_j\!\left(\frac{\mathbf{q}_i \cdot \mathbf{k}_j}{\sqrt{d_k}}\right), \\
    \mathbf{o}_i &= \sum_j \alpha_{ij} \mathbf{v}_j,
\end{align}
where $\alpha_{ij} \in [0,1]$ are the normalized attention weights and $\mathbf{o}_i$ is the output at position $i$.

To show the absence of couplings explicitly, we assign a binary spin variable $s_j \in \{-1, +1\}$ to each key position $j$, where $s_j = +1$ means ``attend'' and $s_j = -1$ means ``ignore.'' In this language, the no-coupling ($J{=}0$) limit corresponds to the energy:
\begin{equation}
    E_i^{(J=0)}(\mathbf{s}) = -\sum_j h_{ij}\, s_j, \quad h_{ij} = \frac{\mathbf{q}_i \cdot \mathbf{k}_j}{\sqrt{d_k}},
    \label{eq:softmax_ising}
\end{equation}
The resulting Boltzmann distribution factorizes into independent spins:
\begin{equation}
    P(\mathbf{s}) = \prod_j P(s_j), \; P(s_j = +1) = \frac{\exp(h_{ij})}{\exp(h_{ij})+\exp(-h_{ij})} = \frac{1}{1+\exp(-2h_{ij})}=\sigma(2 h_{ij}).
\end{equation}
This is precisely sigmoid attention~\cite{ramapuram2024sigmoid}: each position's attend/ignore decision is statistically independent, and the connected correlation $\langle s_j s_k \rangle - \langle s_j \rangle \langle s_k \rangle$ vanishes for all $j \neq k$. Softmax attention shares the same local-field structure with $J=0$ but imposes a categorical normalization ($\sum_j \alpha_{ij} = 1$), which introduces competition between positions without learnable pairwise structure.

\subsection{Introducing Interactions}

We introduce interactions by adding learnable pairwise couplings between attention decisions. For each query position $i$, the attention pattern $\mathbf{s} \in \{-1,+1\}^T$ is governed by the Boltzmann distribution of the full Ising model:
\begin{equation}
    P_i(\mathbf{s}) = \frac{1}{Z_i} \exp\!\Bigl(\sum_j h_{ij} s_j + \sum_{j<k} J_{jk} s_j s_k \Bigr),
    \label{eq:boltzmann_attn}
\end{equation}
where $J_{jk} \in \mathbb{R}$ are learnable couplings for each head and shared across the batch, and $Z_i = \sum_{\mathbf{s}} \exp(-E_i(\mathbf{s}))$ is the partition function. Figure~\ref{fig:architecture} contrasts a standard Transformer block with our Boltzmann variant, where softmax attention is replaced by an energy-based interaction module.

The attention weight for position $j$ is obtained from marginal spin magnetization as
\begin{equation}
    \alpha_{ij} = \frac{\langle s_j \rangle_i + 1}{2} \in [0,1],
\end{equation}
where $\langle s_j \rangle_i = \sum_{\mathbf{s}} s_j P_i(\mathbf{s})$. Since these marginal activations do not sum to one in general, we use the normalized weights $\tilde{\alpha}_{ij} = \alpha_{ij} / \sum_k \alpha_{ik}$ for value aggregation.

The key difference from softmax is that $P_i(\mathbf{s})$ no longer factorizes. When $J_{jk} > 0$ (ferromagnetic), attending to position $j$ increases the probability of attending to $k$, so the model learns co-attention; when $J_{jk} < 0$ (antiferromagnetic), attending to $j$ decreases the probability of attending to $k$, producing competitive attention.

Thus, a position can receive attention not only through its own favorable local field, but also through couplings to other positions favored by the query. For example, when $J_{jk}>0$ and position $j$ is favored by its local field $h_{ij}>0$, the interaction term increases the tendency of $s_k$ to take the active state even when $h_{ik}\approx0$. In this case, position $k$ receives attention through its structural relationship to position $j$, rather than solely through its own query--key relevance. In softmax attention, by contrast, positions interact only through the normalizing denominator. For autoregressive models with causal masking, query position $i$ only attends to positions $j\leq i$, and the couplings $J_{jk}$ are applied only among causally visible positions.

\subsection{Cooperative Attention beyond Softmax}
\label{sec:expressiveness}

The Ising framework provides a precise characterization of how learnable couplings extend the representational capacity of attention beyond the $J{=}0$ regime.

\begin{proposition}[Fluctuation--dissipation identity {\cite{baxter1982exactly}}]
\label{prop:susceptibility}
For a query position $i$, the cross-susceptibility of the unnormalized Boltzmann marginal activation $\alpha_{ij} = (\langle s_j \rangle_i + 1)/2$ satisfies:
\begin{equation}
    \frac{\partial \alpha_{ij}}{\partial h_{ik}} = \frac{1}{2}\bigl(\langle s_j s_k \rangle_i - \langle s_j \rangle_i \langle s_k \rangle_i\bigr) \equiv \frac{1}{2}\langle s_j s_k \rangle_c.
    \label{eq:susceptibility}
\end{equation}
At $J{=}0$, spins are independent and $\langle s_j s_k \rangle_c = 0$ for $j \neq k$; at $J \neq 0$, the connected correlation is generally nonzero.
\end{proposition}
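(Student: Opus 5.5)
The plan is to differentiate the Boltzmann expectation directly with respect to the local field. Fix the query position $i$ and write $P_i(\mathbf{s}) = \exp(\Phi_i(\mathbf{s}))/Z_i$ with
\[
\Phi_i(\mathbf{s}) = \sum_j h_{ij}s_j + \sum_{j<k}J_{jk}s_js_k ,
\]
as in \eqref{eq:boltzmann_attn}. Because the spin space $\{-1,+1\}^T$ is finite, every quantity is a finite sum of smooth functions of the $h_{ij}$. Differentiation can therefore be exchanged with summation without any analytic justification beyond this finiteness remark.

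The first step is the derivative of the log-partition function. Since $\partial\Phi_i/\partial h_{ik} = s_k$, we get $\partial Z_i/\partial h_{ik} = \sum_{\mathbf{s}} s_k e^{\Phi_i(\mathbf{s})}$. Hence
\[
\partial \log Z_i/\partial h_{ik} = \langle s_k\rangle_i .
\]

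The second step handles the numerator. Write $\langle s_j\rangle_i = N_{ij}/Z_i$ with $N_{ij} = \sum_{\mathbf{s}} s_j e^{\Phi_i(\mathbf{s})}$. By the same reasoning, $\partial N_{ij}/\partial h_{ik} = \sum_{\mathbf{s}} s_js_k e^{\Phi_i(\mathbf{s})} = Z_i\langle s_js_k\rangle_i$. The quotient rule then gives
\[
\partial\langle s_j\rangle_i/\partial h_{ik} = \langle s_js_k\rangle_i - \langle s_j\rangle_i\langle s_k\rangle_i .
\]
Since $\alpha_{ij} = (\langle s_j\rangle_i+1)/2$ is affine in $\langle s_j\rangle_i$, multiplying by $1/2$ yields \eqref{eq:susceptibility}. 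Two points deserve a remark here. The couplings $J_{jk}$ do not depend on $h$, so they contribute nothing extra to the derivative. The identity concerns the unnormalized $\alpha_{ij}$, not $\tilde\alpha_{ij}$, which matches the statement.

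For the second claim, at $J=0$ the weight $e^{\Phi_i}$ factorizes as $\prod_j e^{h_{ij}s_j}$. So $P_i$ is a product measure, as already shown before the proposition, and $\langle s_js_k\rangle_i = \langle s_j\rangle_i\langle s_k\rangle_i$ for $j\neq k$. To justify "generally nonzero" for $J\neq0$, I would exhibit a single instance rather than prove a genericity statement. Take $T=2$, $h=0$ and $J_{12}=J$. Then $\langle s_1\rangle = \langle s_2\rangle = 0$ by the symmetry $\mathbf{s}\mapsto-\mathbf{s}$, while
\[
\langle s_1s_2\rangle = \frac{2e^{J}-2e^{-J}}{2e^{J}+2e^{-J}} = \tanh J \neq 0 .
\]
This example also shows that the sign of $\langle s_1s_2\rangle_c$ follows the sign of $J$.

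None of these steps is a real obstacle. The only point needing care is the meaning of "generally nonzero". If a stronger statement is wanted, I would note that $\langle s_js_k\rangle_c$ is a real-analytic function of $(h,J)$ that is not identically zero by the example above. Its zero set therefore has measure zero, which makes "generically nonzero" precise.
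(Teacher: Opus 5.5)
Your proposal is correct and takes the same route as the paper: you use the affine relation to reduce to $\partial\langle s_j\rangle_i/\partial h_{ik}$ and then apply the covariance identity, which the paper simply cites as standard while you derive it explicitly via the quotient rule. Your $T=2$ example with $\langle s_1 s_2\rangle_c=\tanh J$ is an addition the paper does not supply for the ``generally nonzero'' clause. It transfers to any pair $(j,k)$ by setting $h=0$ and switching on only $J_{jk}$.
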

\begin{proof}
Since $\alpha_{ij} = (\langle s_j \rangle_i + 1)/2$, differentiating gives $\partial\alpha_{ij}/\partial h_{ik}=(1/2)\partial \langle s_j\rangle_i/\partial h_{ik}$. Using the standard Boltzmann identity $\partial \langle s_j\rangle_i/\partial h_{ik}=\langle s_j s_k\rangle_i-\langle s_j\rangle_i\langle s_k\rangle_i$ yields the result.
\end{proof}

We state this well-known identity to make the connection to attention explicit. Each coupling $J_{jk}$ controls the connected correlation between positions $j$ and $k$---a degree of freedom absent when $J{=}0$. As established in Section~\ref{sec:model}, both softmax and sigmoid attention operate in the $J{=}0$ regime where $\partial \alpha_{ik} / \partial h_{ij} = 0$ for $k \neq j$. When $J \neq 0$, Proposition~\ref{prop:susceptibility} gives $\partial \alpha_{ik} / \partial h_{ij} = \langle s_j s_k \rangle_c/2\neq 0$: the model acquires a \emph{cooperative} channel through which attending to one position can reinforce attention to another. Whether the correlation is positive (cooperative) or negative (competitive) depends on the sign and magnitude of the learned couplings; in our experiments, the learned couplings exhibit the distance-dependent structure shown in Section~\ref{sec:shakespeare}.
For a fixed query position, a non-interacting attention energy contains only $T$ local-field terms, one for each visible position. Boltzmann attention augments these with $\binom{T}{2}$ pairwise interaction terms, corresponding to the unordered pairs of visible positions. Thus, the number of interaction degrees of freedom absent from the $J=0$ model grows quadratically with the attention window size. This provides a simple structural explanation for why the benefit of learnable couplings can become more pronounced at longer sequence lengths, as examined empirically in the following section.


\section{Experiments}
\label{sec:experiments}

\subsection{Setup}
\label{sec:exact}

We evaluate the proposed method on two tasks. The first is \textbf{Tiny Shakespeare} character-level language modeling, using 100K characters with a 90\%/10\% train/validation split and 61 unique characters. Performance is measured by perplexity, $\mathrm{PPL}=\exp(\mathcal{L})$, where $\mathcal{L}$ is the cross-entropy loss; lower values indicate better performance. The second is a synthetic \textbf{bracket matching} task that requires pairwise attention coordination (Section~\ref{sec:bracket_task}) and is evaluated by classification accuracy. In both tasks, we compute the Boltzmann marginals (Eq.~\ref{eq:boltzmann_attn}) by exact enumeration over all $2^T$ spin configurations, introducing no approximation or sampling noise. The exponential cost limits this approach to moderate sequence lengths ($T \leq 14$ for Shakespeare, $T \leq 16$ for bracket matching), but exact computation provides a controlled proof of concept that isolates the effect of learnable $J$ without confounding artifacts. Scalable sampling-based alternatives to exact enumeration are explored in Section~\ref{sec:dqa}.

For language modeling, the model is a single-layer, causal (decoder-only) Transformer with embedding dimension $d_{\mathrm{model}} = 64$, a single attention head ($H = 1$, so $d_k = d_{\mathrm{model}} = 64$), a two-layer feed-forward network with feed-forward hidden dimension $d_{\mathrm{ff}} = 128$ and GELU activation, and dropout $0.1$. We use a single head ($H=1$) to isolate the effect of pairwise couplings: with only one head, improvements cannot be attributed to inter-head diversity, making the role of the intra-head couplings $J$ unambiguous. This architecture belongs to the same family underlying modern large language models~\cite{radford2019language,touvron2023llama}. We compare four attention modes: \textbf{softmax} (standard attention, Eq.~\ref{eq:softmax_attn}), \textbf{$h{+}J$} (full Ising model with learnable couplings, our proposed method), \textbf{$h$-only} ($J{=}0$, equivalent to sigmoid attention~\cite{ramapuram2024sigmoid}), and \textbf{$J$-only} ($h{=}0$, structural prior without data-dependent local fields).

All experiments use 10 random seeds. We optimize with AdamW~\cite{loshchilov2019decoupled} (a variant of Adam with decoupled weight decay regularization), using gradient clipping at 1.0 and early stopping with patience 20 (training halts if validation loss does not improve for 20 consecutive evaluation epochs). For Shakespeare, the learning rate is $10^{-3}$ for all parameters except the couplings $J$, which use a separate learning rate $\mathrm{LR}_J = 3 \times 10^{-5}$ with weight decay $0.01$. For bracket matching, the learning rate is $3 \times 10^{-4}$ and $\mathrm{LR}_J = 10^{-4}$ (weight decay $0.01$).

\subsection{Character-Level Language Modeling: Tiny Shakespeare}
\label{sec:shakespeare}

\begin{table}[t]
    \centering
    \caption{Tiny Shakespeare perplexity over 10 random seeds, reported as mean $\pm 1\sigma$. Boltzmann attention outperforms softmax for $T \geq 6$, with relative gains that generally increase with sequence length. Parenthetical value denotes relative improvement over softmax in percent; positive values indicate better performance.}
    \label{tab:shakespeare}
    \begin{tabular*}{\linewidth}{@{\extracolsep{\fill}}rcccc}
        \toprule
        $T$ & Softmax & $h{+}J$ & $h$-only ($J{=}0$) & $J$-only ($h{=}0$) \\
        \midrule
        4  & $6.775 \pm 0.031$ & $6.776 \pm 0.034$ \gap{$-$0.02} & $6.776 \pm 0.028$ \gap{$-$0.02} & $6.894 \pm 0.032$ \gap{$-$1.76} \\
        6  & $6.182 \pm 0.021$ & $\mathbf{6.156 \pm 0.024}$ \gap{+0.43} & $6.179 \pm 0.024$ \gap{+0.06} & $6.460 \pm 0.030$ \gap{$-$4.48} \\
        8  & $5.894 \pm 0.027$ & $\mathbf{5.846 \pm 0.036}$ \gap{+0.81} & $6.075 \pm 0.163$ \gap{$-$3.08} & $6.318 \pm 0.015$ \gap{$-$7.20} \\
        10  & $5.750 \pm 0.030$ & $\mathbf{5.697 \pm 0.031}$ \gap{+0.92} & $5.976 \pm 0.204$ \gap{$-$3.93} & $6.296 \pm 0.030$ \gap{$-$9.49} \\
        12  & $5.628 \pm 0.019$ & $\mathbf{5.567 \pm 0.026}$ \gap{+1.08} & $6.030 \pm 0.124$ \gap{$-$7.14} & $6.318 \pm 0.038$ \gap{$-$12.25} \\
        14  & $5.551 \pm 0.027$ & $\mathbf{5.493 \pm 0.025}$ \gap{+1.04} & $5.882 \pm 0.208$ \gap{$-$5.97} & $6.368 \pm 0.029$ \gap{$-$14.73} \\
        \bottomrule
    \end{tabular*}
\end{table}

Table~\ref{tab:shakespeare} shows the full four-way comparison across sequence lengths $T = 4$ to $14$. Two key findings emerge. First, the improvement grows with $T$: Boltzmann attention ($h{+}J$) matches softmax at $T = 4$ and increasingly outperforms it as $T$ grows---$+0.43\%$ at $T = 6$, $+0.81\%$ at $T = 8$, $+0.92\%$ at $T = 10$, $+1.08\%$ at $T = 12$, and $+1.04\%$ at $T = 14$. The rate plateaus between $T{=}12$ and $T{=}14$, possibly reflecting the limited capacity of the small model. This trend is expected: as the attention window expands, the number of position pairs whose co-attention structure the $J{=}0$ model cannot capture grows as $O(T^2)$, so the gap between interacting and non-interacting models widens with $T$.

Second, the results support that both $h$ and $J$ are necessary. The $h$-only ablation ($J = 0$, equivalent to sigmoid attention) performs \emph{worse} than softmax at $T \geq 8$, confirming that merely changing the functional form from categorical to independent binary spins does not help---the $J{=}0$ bottleneck remains. The $J$-only ablation ($h = 0$) performs substantially worse than all other modes, showing that data-dependent local fields are essential. Only the combination of $h$ and $J$ outperforms softmax, directly demonstrating that pairwise couplings---not the sigmoid parameterization---are the source of the improvement.

\begin{figure}[t]
    \centering
    \includegraphics[width=\linewidth]{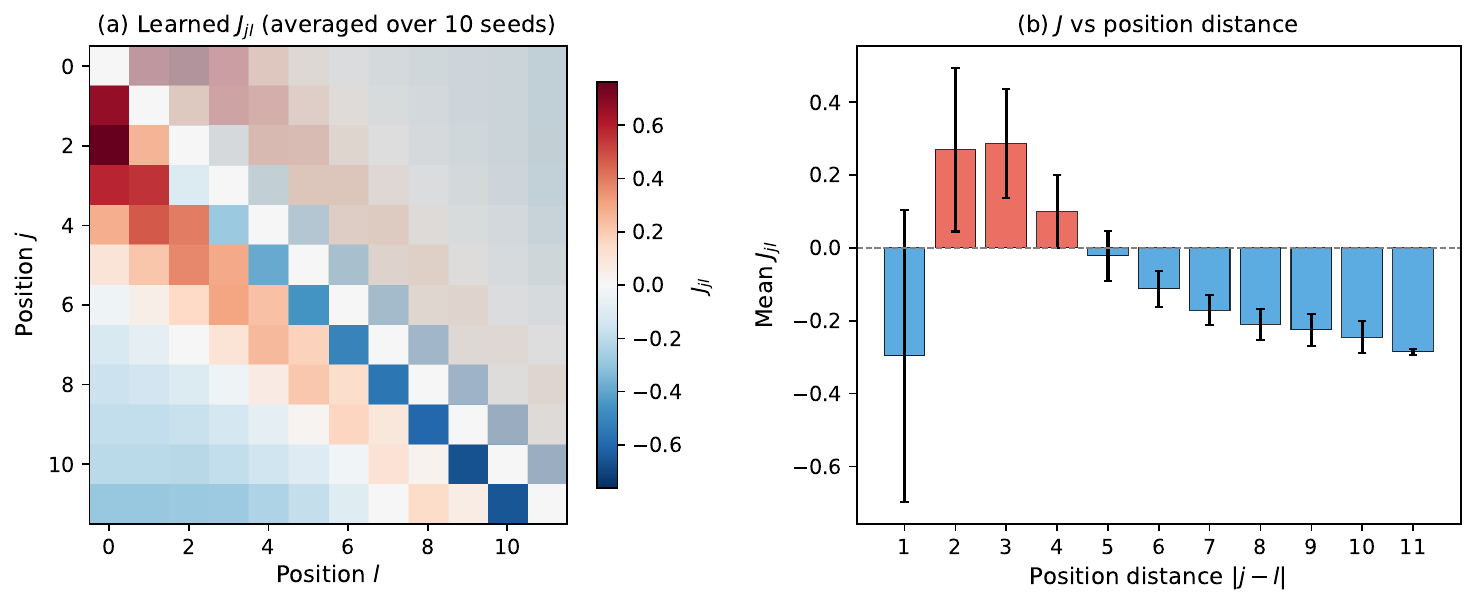}
    \caption{Learned pairwise couplings $J_{jl}$ on Tiny Shakespeare ($T = 12$, averaged over 10 seeds). (a)~Seed-averaged coupling matrix (lower triangle; upper triangle is masked by causal attention). (b)~Mean $J_{jl}$ as a function of position distance $|j - l|$. Nearby positions ($|j{-}l| = 2$--$4$) develop ferromagnetic (positive) couplings, promoting co-attention; distant positions ($|j{-}l| \geq 6$) develop antiferromagnetic (negative) couplings, suppressing co-attention. Error bars show $\pm 1\sigma$ across seeds and position pairs.}
    \label{fig:j_structure}
\end{figure}

Figure~\ref{fig:j_structure} visualizes the learned coupling matrix $J$ at $T = 12$. The model learns a distance-dependent structure: nearby positions ($|j{-}l| = 2$--$4$) develop positive (ferromagnetic) couplings that promote co-attention, while distant positions ($|j{-}l| \geq 6$) develop negative (antiferromagnetic) couplings that suppress co-attention. This pattern is consistent across seeds and resembles short-range ferromagnetic order with longer-range antiferromagnetic competition---a structure that $J{=}0$ attention cannot represent.

\subsection{Bracket Matching}
\label{sec:bracket_task}

The language modeling experiments demonstrate consistent improvements in a natural sequence-modeling setting. To examine the role of pairwise couplings more directly, we introduce a controlled synthetic task whose structure explicitly requires coordinated attention between positions. Specifically, we compare $J{=}0$ and $J{\neq}0$ attention on bracket matching, where correct prediction depends on identifying paired opening and closing brackets.

The input is a sequence of length $T$ containing Dyck-1 words~\cite{yao2021selfattention} (properly nested parentheses) interleaved with random filler tokens from a vocabulary of size 12. For each closing bracket at position $t$, the model must predict the position index of its matching opening bracket. Only closing brackets contribute to the loss.

The architecture is a single-layer causal Transformer with $d_{\mathrm{model}} = 32$, $H = 1$ head, and a two-layer feed-forward network with hidden dimension $d_{\mathrm{ff}} = 64$ and GELU activation---the same standard Transformer design used throughout this work. Resolving nested bracket matches requires the attention mechanism to track which opening brackets have already been consumed by inner closing brackets. This is inherently a \emph{pairwise} attention dependency: the correct attention target for one closing bracket depends on which targets other closing brackets have claimed. Without learnable pairwise couplings ($J{=}0$), the model has no mechanism to represent this constraint within the attention layer.

\begin{table}[t]
    \centering
    \caption{Bracket matching accuracy with the standard Transformer architecture ($d_{\mathrm{ff}}=64$), reported over 10 random seeds as mean $\pm 1\sigma$. Parenthetical values denote the gap relative to softmax in percentage points; positive values indicate better performance.}
    \label{tab:bracket}
    \begin{tabular*}{\linewidth}{@{\extracolsep{\fill}}rcccc}
        \toprule
        $T$ & Softmax & $h{+}J$ & $h$-only & $J$-only \\
        \midrule
        4  & $100.0 \pm 0.0$ & $100.0 \pm 0.0$ \gappp{+0.00} & $100.0 \pm 0.0$ \gappp{+0.00} & $100.0 \pm 0.0$ \gappp{+0.00} \\
        8  & $99.21 \pm 0.45$ & $99.00 \pm 0.53$ \gappp{$-$0.21} & $99.11 \pm 0.42$ \gappp{$-$0.10} & $100.00 \pm 0.01$ \gappp{+0.79} \\
        12 & $93.36 \pm 0.72$ & $\mathbf{94.91 \pm 1.71}$ \gappp{+1.55} & $92.65 \pm 0.47$ \gappp{$-$0.71} & $94.61 \pm 4.39$ \gappp{+1.25} \\
        16 & $93.08 \pm 1.17$ & $\mathbf{95.97 \pm 0.96}$ \gappp{+2.89} & $90.06 \pm 0.88$ \gappp{$-$3.02} & $87.42 \pm 4.47$ \gappp{$-$5.66} \\
        \bottomrule
    \end{tabular*}
\end{table}

Table~\ref{tab:bracket} shows the results. Boltzmann attention ($h{+}J$) outperforms softmax at longer sequence lengths, with the gap growing from $+1.55$\,pp at $T = 12$ to $+2.89$\,pp at $T = 16$. The gap persists even with the feed-forward network (FFN) present, confirming that learnable pairwise couplings provide a representational advantage that pointwise layers cannot replicate. This increasing gap with $T$ reflects the growing combinatorial complexity of nested bracket matching. Figure~\ref{fig:learning_curve} shows the learning curves at $T{=}12$: Boltzmann attention separates from softmax during training and maintains a persistent gap. Figure~\ref{fig:output_heatmap} (Appendix) visualizes this difference on a concrete example. The four-way ablation confirms the same pattern as in language modeling: $h$-only performance degrades at longer sequences ($-0.71$\,pp at $T{=}12$, $-3.02$\,pp at $T{=}16$), while $J$-only without data-dependent fields degrades substantially ($-5.66$\,pp at $T{=}16$).

\begin{figure}[t]
    \centering
    \includegraphics[width=0.65\linewidth]{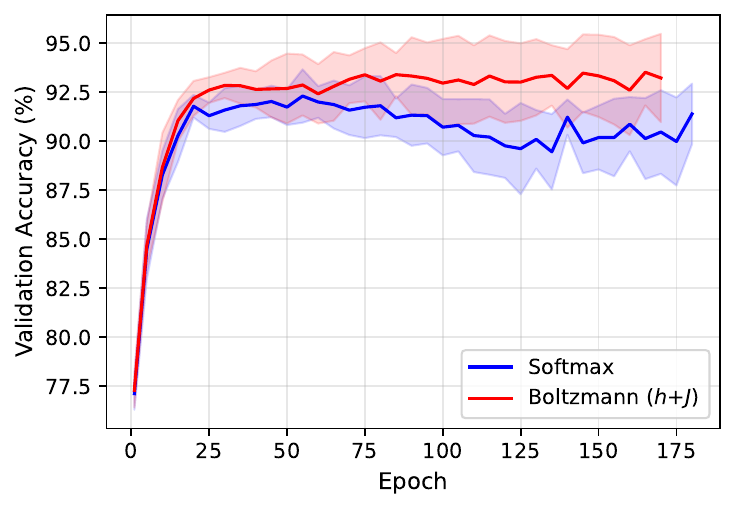}
    \caption{Learning curves for bracket matching at $T=12$, averaged over 10 random seeds with shaded regions indicating $\pm 1\sigma$. Boltzmann attention (red) separates from softmax attention (blue) during training and maintains a persistent accuracy gap.}
    \label{fig:learning_curve}
\end{figure}

A controlled ablation removing the FFN entirely (Appendix~\ref{app:bracket_noffn}) yields even larger gaps ($+4.53$\,pp at $T{=}16$), confirming that while the FFN can partially compensate for the absence of pairwise couplings, it cannot eliminate the $J{=}0$ bottleneck.

\subsection{Ablation: What Does $J$ Learn?}
\label{sec:ablation}

The four-way ablation results in Tables~\ref{tab:shakespeare}--\ref{tab:bracket} support three conclusions. First, $h$-only attention (sigmoid) performs comparably to or worse than softmax: removing $J$ limits performance regardless of whether the functional form is categorical (softmax) or independent binary (sigmoid). Second, $J$-only attention is limited: without data-dependent local fields, the model cannot leverage query--key similarity, though the computational implications of a pure structural prior merit further investigation (Section~\ref{sec:discussion}). Third, only the \emph{combination} of $h$ and $J$ outperforms softmax, as $J$ complements $h$ by introducing pairwise correlations that $h$ alone cannot produce.

\subsection{DQA for Training Boltzmann Attention}
\label{sec:dqa}

The preceding experiments establish that Boltzmann attention with learnable couplings consistently improves over softmax within a standard Transformer architecture. In those experiments, we used exact Boltzmann computation as a controlled implementation, allowing us to isolate the representational effect of the couplings without sampling artifacts. This choice, however, scales exponentially with the attention window size and is therefore suitable only for short sequences. To apply Boltzmann attention at practical sequence lengths, one needs an efficient sampling method that preserves the inter-position correlations induced by the Ising couplings. DQA provides a natural route to such sampling: prior studies have shown that DQA can generate approximate Boltzmann samples for Ising models at an effective inverse temperature controlled by the annealing rate~\cite{gyhm2024boltzmann}, and that such samples can support efficient end-to-end training of energy-based generative models~\cite{kim2026diabatic,kim2026BMVAE}.

As a proof of principle, we numerically simulate the DQA process by Trotterizing the time evolution into a quantum circuit. The simulation uses $n_{\mathrm{trotter}} = 200$ Trotter steps with an annealing time $\tau = 5.0$ ns, following the fast-annealing schedule of a current quantum annealing processor. During training, the Trotterized circuit produces attention-weight samples for each query position, and gradients are propagated through the sampling step via the straight-through estimator (STE). All other model components, including the embedding layer, FFN, output head, and hyperparameters, are identical to those used in the exact Boltzmann experiments in Sections~\ref{sec:shakespeare}--\ref{sec:bracket_task}.

\begin{table}[t]
    \centering
    \caption{Comparison of exact Boltzmann and DQA-trained Boltzmann attention ($h{+}J$). DQA training replaces exact enumeration with Trotterized quantum annealing simulation. Despite the approximation, DQA-trained models achieve competitive performance, validating DQA as a practical sampling method for Boltzmann attention. DQA results are single-seed.}
    \label{tab:dqa}
    \begin{tabular*}{\linewidth}{@{\extracolsep{\fill}}rcccc}
        \toprule
        & \multicolumn{2}{c}{Shakespeare (PPL $\downarrow$)} & \multicolumn{2}{c}{Bracket (Acc \% $\uparrow$)} \\
        \cmidrule(lr){2-3} \cmidrule(lr){4-5}
        $T$ & Exact & DQA & Exact & DQA \\
        \midrule
        4  & $6.776 \pm 0.034$ & $6.671$ & $100.0 \pm 0.0$ & $100.0$ \\
        6  & $6.156 \pm 0.024$ & $6.182$ & $100.0 \pm 0.0$ & $100.0$ \\
        8  & $5.846 \pm 0.036$ & $6.000$ & $99.00 \pm 0.53$ & $100.0$ \\
        \bottomrule
    \end{tabular*}
\end{table}

Table~\ref{tab:dqa} compares exact Boltzmann and DQA-trained models. On Shakespeare, DQA-trained models achieve perplexities comparable to exact computation across all sequence lengths. On bracket matching, DQA training achieves $100\%$ accuracy at both $T{=}4$ and $T{=}8$. These results suggest that the Trotterized DQA process produces samples of sufficient quality for end-to-end training of Boltzmann attention.

The significance of DQA-based training lies not in matching exact computation at small $T$, where exact enumeration is already tractable, but in connecting Boltzmann attention to a quantum-hardware sampling mechanism that scales linearly with $T$---each query position requires one anneal of $O(1)$ duration, yielding $O(T)$ total---in contrast to the $O(2^T)$ cost of exact classical enumeration. Together with prior demonstrations of DQA-trained Boltzmann machines~\cite{kim2026diabatic,kim2026BMVAE}, these results support DQA as a practical realization route for Boltzmann attention beyond exact enumeration, while retaining the representational advantage of learnable inter-position couplings.

\section{Discussion}
\label{sec:discussion}

Our results point to a general principle for when energy-based models improve neural network components. Two lines of evidence confirm that the $J{=}0$ bottleneck is structural rather than incidental: (i)~the advantage of learnable couplings persists within a standard Transformer architecture with an FFN (Table~\ref{tab:bracket}), and (ii)~the performance gap grows roughly monotonically with the sequence length $T$. The FFN applies the same pointwise transformation to each position independently, so it can enrich per-position representations but does not directly introduce the inter-position correlations provided by $J$. Together, these observations indicate that the bottleneck arises from the functional form of the attention operation itself and is not fully compensated by surrounding pointwise layers. This perspective suggests that other coupling-free operations in deep networks, such as independent gating in LSTMs or per-channel normalization, may also be candidates for Ising-type generalization.

Two recent works introduce pairwise couplings into attention but fail to improve over softmax (Table~\ref{tab:comparison}), each missing at least one of three necessary ingredients: freely learnable $J$, data-dependent $h$, and correlation-preserving inference (see Section~\ref{sec:related} for details).

We also investigate higher-order interactions beyond pairwise couplings, focusing on three-body terms; the full results are reported in Appendix~\ref{app:3body}, and we summarize the main takeaways here. On Shakespeare, adding three-body couplings $K$ does not improve over pairwise $h{+}J$, though $h{+}K$ nearly matches the full $h{+}J{+}K$, suggesting that $K$ can subsume the role of $J$ when combined with data-dependent fields. On bracket matching, however, $h{+}J{+}K$ and $h{+}K$ substantially outperform $h{+}J$ at $T \geq 12$, indicating that the combinatorial structure of nested brackets benefits from higher-order interactions in a way that natural language does not.

To quantify how the advantage evolves with sequence length, we fit $\mathrm{PPL}(T) = a\,e^{-bT} + c$ to the Shakespeare perplexity curves (Figure~\ref{fig:scaling_fit}). While the decay rates $b$ are similar across models (${\approx}0.28$--$0.30$), the asymptotic floor $c$ is consistently lower for Boltzmann variants ($h{+}J$: $5.44$, $h{+}J{+}K$: $5.45$) than for softmax ($5.51$) or $h$-only ($5.99$). This indicates that \emph{$J$ lowers the achievable performance floor} rather than accelerating convergence with $T$.

\begin{figure}[t]
    \centering
    \includegraphics[width=0.55\linewidth]{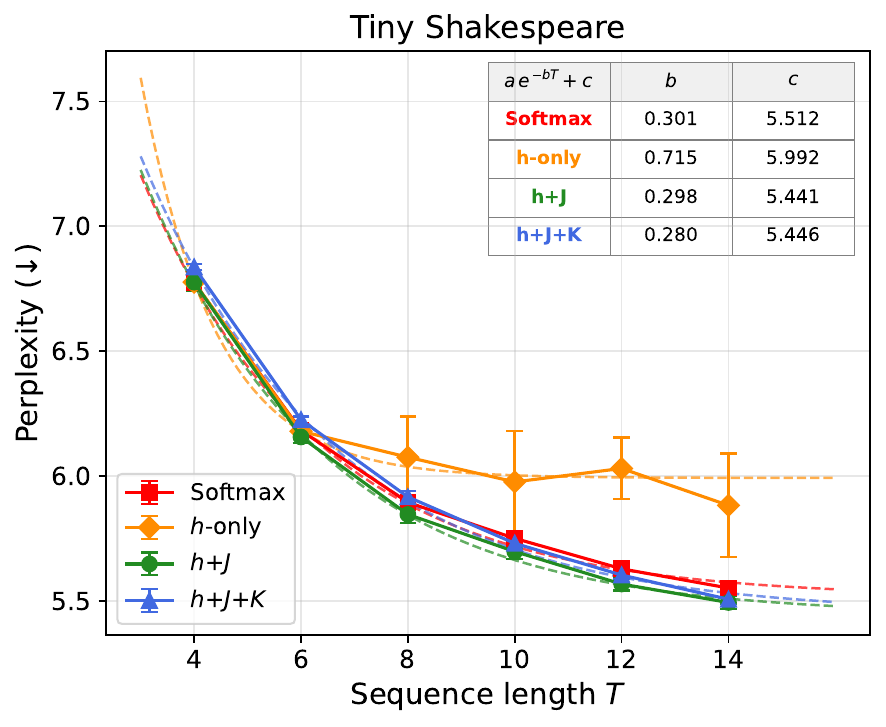}
    \caption{Sequence-length scaling of Tiny Shakespeare perplexity. Dashed curves show fits to $a e^{-bT}+c$, with the fitted parameters reported in the inset. The fitted asymptotic floor $c$ is lower for the Boltzmann variants ($h{+}J$: $5.44$, $h{+}J{+}K$: $5.45$) than for softmax ($5.51$) and $h$-only ($5.99$), suggesting that learnable interaction terms improve the attainable performance level. Error bars show $\pm 1\sigma$ over 10 random seeds.}
    \label{fig:scaling_fit}
\end{figure}

Our exact enumeration experiments use small models (1 layer, 1 head, $T \leq 16$) on character-level tasks, a deliberate choice to isolate the effect of $J$ without approximation artifacts. Scaling Boltzmann attention to practical sequence lengths requires inference methods that preserve inter-position correlations. The DQA results in Section~\ref{sec:dqa} validate one such path: Trotterized quantum annealing produces samples of sufficient quality for end-to-end training of Boltzmann attention, achieving competitive performance with exact computation (Table~\ref{tab:dqa}). On quantum hardware, DQA-based sampling~\cite{gyhm2024boltzmann,kim2026diabatic} scales linearly with $T$, removing the $O(2^T)$ bottleneck of exact enumeration while preserving the representational advantage of learnable couplings. This positions DQA as the practical realization method for Boltzmann attention at scale. Classical methods such as parallel tempering~\cite{hukushima1996exchange} can also exploit GPU parallelism for correlated sampling.

An intriguing direction arises from the $J$-only ablation. Although this variant underperforms on general tasks (Section~\ref{sec:ablation}), it has a distinct computational appeal: it eliminates the query--key computation entirely, since the attention pattern depends only on the learnable couplings $J$, not on input-dependent local fields. Crucially, the model retains input dependence through the value projection $\mathbf{v} = W_V \mathbf{x}$, so the attention output still adapts to the input even though the \emph{where-to-attend} decision is structural. In settings where positional structure dominates, this trade-off may be favorable, positioning $J$-only as a viable model rather than merely an ablation.

The $J$-only Ising model is particularly well-suited for DQA: because $h = 0$, the gradients with respect to $J$ can be estimated via correlation matching ($\partial \mathcal{L} / \partial J_{jl} \propto \langle s_j s_l \rangle_{\text{data}} - \langle s_j s_l \rangle_{\text{model}}$) using hardware samples alone, without backpropagation through the attention mechanism. For the full model with $h \neq 0$, DQA-based training is also possible but requires differentiating through the local fields, which introduces additional complexity. This suggests a QK-free, hardware-native attention paradigm in which a quantum annealer both performs inference and provides training gradients for the structural attention prior.

\section{Conclusion}
\label{sec:conclusion}

We showed that the absence of learnable pairwise couplings in standard attention---shared by both softmax and sigmoid despite their structural differences---is a representational bottleneck. Boltzmann attention addresses this by introducing learnable Ising couplings $J_{jk}$ that create inter-position correlations absent in the $J{=}0$ regime.

Exact Boltzmann computation confirms that introducing pairwise couplings consistently improves performance within a standard Transformer architecture, with the improvement growing roughly with sequence length: up to $+1.08\%$ in perplexity on Shakespeare at $T = 12$, and $+2.89$ percentage points in accuracy on bracket matching at $T = 16$. The feed-forward network cannot compensate for the $J{=}0$ bottleneck, and a controlled ablation without FFN (Appendix~\ref{app:bracket_noffn}) yields even larger gaps ($+4.53$\,pp).

These results establish that introducing learnable pairwise couplings into attention is a principled and empirically validated generalization. The absence of such couplings is a structural property of the attention operation itself, and learnable $J$ provides a direct, interpretable remedy.

Scaling beyond exact enumeration requires correlation-preserving sampling. We demonstrated that DQA serves as a practical realization method: Trotterized simulation achieves competitive performance with exact computation on both tasks (Table~\ref{tab:dqa}), confirming that approximate Boltzmann samples from a diabatic quantum process~\cite{gyhm2024boltzmann,kim2026diabatic} suffice for learning meaningful couplings. Because DQA cost grows only linearly with $T$ on quantum hardware, Boltzmann attention is not limited to the small-$T$ regime explored here. Classical methods such as parallel tempering~\cite{hukushima1996exchange} provide a complementary scaling path.

\section*{Acknowledgements}

This work was partly supported by the Institute of Information \& Communications Technology Planning \& Evaluation (IITP)-ITRC (Information Technology Research Center) grant funded by the Korea government (MSIT)(IITP-2026-RS-2026-25519864, $20\%$). This work was also supported by the IITP grant (No. 2019-0-00003, Research and Development of Core Technologies for Programming, Running, Implementing and Validating of Fault-Tolerant Quantum Computing System, $20\%$), the National Research Foundation of Korea (RS-2025-02309510, $20\%$), the Ministry of Trade, Industry, and Energy (MOTIE), Korea, under the Industrial Innovation Infrastructure Development Project (RS-2024-00466693, $20\%$), and by Korean ARPA-H Project through the Korea Health Industry Development Institute (KHIDI), funded by the Ministry of Health \& Welfare, Korea (RS-2025-25456722, $20\%$).

\appendix
\section*{Appendix}
\section{Output Probability Visualization}
\label{app:output_prob}
We illustrate the qualitative difference between softmax and Boltzmann attention on a concrete bracket matching example without an FFN (Figure~\ref{fig:output_heatmap}). In this setting, the model must identify the matching opening bracket for each closing bracket using the attention mechanism alone, without downstream pointwise compensation. Softmax often assigns probability mass to nearby but incorrect positions, leading to several wrong argmax predictions. In contrast, Boltzmann attention produces sharper probability mass around the correct matching positions, correctly identifying all pairs in this example. This visualization supports the quantitative results in Appendix~\ref{app:bracket_noffn}, where removing the FFN amplifies the performance gap between the two attention mechanisms.

\begin{figure*}[htbp]
    \centering
    \includegraphics[width=0.95\linewidth]{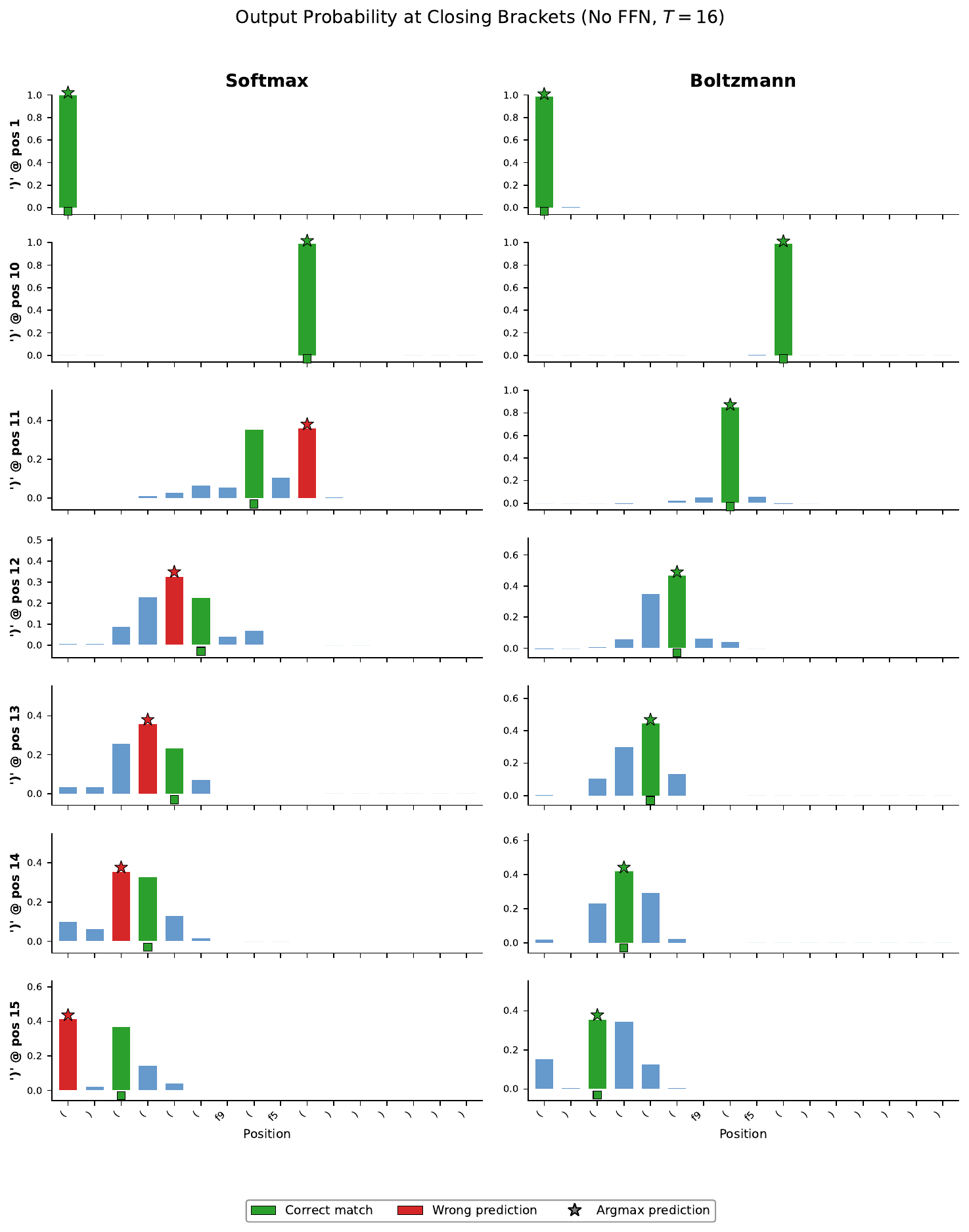}
    \caption{Output probability distribution at each closing bracket for the no-FFN bracket matching model ($T=16$, $H=1$). Green bars mark the correct match position; red bars mark the incorrect argmax prediction. Softmax (left) fails on 5 of 7 closing brackets, while Boltzmann (right) correctly identifies all matching positions.}
    \label{fig:output_heatmap}
\end{figure*}

\section{Bracket Matching Without Feed-Forward Network}
\label{app:bracket_noffn}

To isolate the attention mechanism from other model components, we also evaluate bracket matching with a minimal architecture that removes the feed-forward network entirely: a single-layer causal Transformer with $d_{\mathrm{model}} = 32$ and $H = 1$ head, so that attention is the sole computation pathway. Any performance gap must originate from the attention formulation itself, not from downstream compensation.

\begin{table}[h]
    \centering
    \caption{Bracket matching accuracy (\%, 10 seeds, $\pm 1\sigma$) \emph{without} FFN. $\Delta$: gap vs.\ softmax (pp). The larger gaps compared to the standard architecture (Table~\ref{tab:bracket}) confirm that the FFN partially compensates for the $J{=}0$ bottleneck but cannot eliminate it.}
    \label{tab:bracket_noffn}
    \begin{tabular*}{\linewidth}{@{\extracolsep{\fill}}rcccc}
        \toprule
        $T$ & Softmax & $h{+}J$ & $h$-only & $J$-only \\
        \midrule
        4  & $100.00 \pm 0.00$ & $100.00 \pm 0.00$ \gappp{+0.00} & $100.00 \pm 0.00$ \gappp{+0.00} & $100.00 \pm 0.00$ \gappp{+0.00} \\
        8  & $98.83 \pm 0.24$  & $\mathbf{99.26 \pm 0.41}$ \gappp{+0.43} & $98.61 \pm 0.24$ \gappp{$-$0.22} & $93.34 \pm 1.15$ \gappp{$-$5.49} \\
        12 & $91.19 \pm 0.59$  & $\mathbf{93.29 \pm 0.43}$ \gappp{+2.10} & $90.93 \pm 0.48$ \gappp{$-$0.26} & $84.65 \pm 2.38$ \gappp{$-$6.54} \\
        16 & $85.13 \pm 0.49$  & $\mathbf{89.65 \pm 0.61}$ \gappp{+4.53} & $85.78 \pm 0.47$ \gappp{+0.65} & $76.65 \pm 0.56$ \gappp{$-$8.48} \\
        \bottomrule
    \end{tabular*}
\end{table}

Without FFN, the performance gap between Boltzmann and softmax attention is substantially larger: $+0.43$\,pp at $T{=}8$, $+2.10$\,pp at $T{=}12$, and $+4.53$\,pp at $T{=}16$ (compared to $+1.55$\,pp and $+2.89$\,pp with FFN in Table~\ref{tab:bracket}). This confirms that the feed-forward network can partially compensate for the $J{=}0$ bottleneck through pointwise transformations, but cannot replicate the inter-position correlations provided by learnable couplings.

\section{Higher-Order Interactions}
\label{app:3body}

We investigate whether extending the Ising energy from pairwise to three-body interactions improves performance. The extended energy function is
\[
    E_i(\mathbf{s}) = -\sum_j h_{ij}\, s_j - \sum_{j<k} J_{jk}\, s_j s_k - \sum_{j<k<l} K_{jkl}\, s_j s_k s_l,
\]
where $K_{jkl}$ is a learnable symmetric three-body coupling tensor with $\binom{T}{3}$ free parameters. The distribution remains Boltzmann ($P_i(\mathbf{s}) \propto e^{-E_i(\mathbf{s})}$), and exact enumeration is unchanged ($2^T$ configurations). The additional cost is polynomial: each configuration requires $O(T^3)$ triplet energy terms instead of $O(T^2)$ pairwise terms. We test four different modes---full three-body ($h{+}J{+}K$), $h{+}K$ ($J{=}0$), $J{+}K$ ($h{=}0$), and $K$-only ($h{=}0$, $J{=}0$)---providing a systematic ablation of the role of each energy term.

\begin{table}[h]
    \centering
    \caption{Higher-order interactions on Tiny Shakespeare (PPL $\downarrow$, 10 seeds). Among $h$-free models, $J{+}K$ outperforms both $J$-only and $K$-only; $h{+}K$ nearly matches the full $h{+}J{+}K$.}
    \label{tab:3body_shakespeare}
    \begin{tabular*}{\linewidth}{@{\extracolsep{\fill}}rcccccccc}
        \toprule
        $T$ & Softmax & $h$-only & $J$-only & $K$-only & $h{+}J$ & $h{+}K$ & $J{+}K$ & $h{+}J{+}K$ \\
        \midrule
        4  & $6.775$ & $6.776$ & $6.894$ & $6.952$ & $6.776$ & $6.840$ & $6.940$ & $6.837$ \\
        6  & $6.182$ & $6.179$ & $6.460$ & $6.417$ & $\mathbf{6.156}$ & $6.228$ & $6.400$ & $6.225$ \\
        8  & $5.894$ & $6.075$ & $6.318$ & $6.168$ & $\mathbf{5.846}$ & $5.915$ & $6.173$ & $5.916$ \\
        10 & $5.750$ & $5.976$ & $6.296$ & $6.053$ & $\mathbf{5.697}$ & $5.733$ & $6.017$ & $5.730$ \\
        12 & $5.628$ & $6.030$ & $6.318$ & $5.933$ & $\mathbf{5.567}$ & $5.608$ & $5.928$ & $5.603$ \\
        14 & $5.551$ & $5.882$ & $6.368$ & $5.847$ & $\mathbf{5.493}$ & $5.512$ & $5.869$ & $5.506$ \\
        \bottomrule
    \end{tabular*}
\end{table}

Table~\ref{tab:3body_shakespeare} shows that three-body couplings consistently underperform the pairwise model ($h{+}J$) on Shakespeare, though the gap narrows with increasing $T$ (from $0.061$ PPL at $T{=}4$ to $0.013$ at $T{=}14$). Notably, $h{+}K$ ($J{=}0$) performs nearly identically to the full $h{+}J{+}K$ (gap $\leq 0.006$ PPL across all $T$), indicating that when three-body couplings and data-dependent fields are both present, pairwise couplings become largely redundant. Without data-dependent fields ($h{=}0$), all models substantially underperform, confirming that structural couplings alone cannot substitute for input-dependent attention.

\begin{table}[h]
    \centering
    \caption{Higher-order interactions on bracket matching with FFN (accuracy \%, 10 seeds). Two-body columns reproduce Table~\ref{tab:bracket} for comparison. $h{+}J{+}K$ and $h{+}K$ substantially outperform $h{+}J$ at $T \geq 12$, unlike on Shakespeare.}
    \label{tab:3body_bracket}
    \begin{tabular*}{\linewidth}{@{\extracolsep{\fill}}rcccccccc}
        \toprule
        $T$ & Softmax & $h$-only & $J$-only & $K$-only & $h{+}J$ & $h{+}K$ & $J{+}K$ & $h{+}J{+}K$ \\
        \midrule
        4  & $100.0$ & $100.0$ & $100.0$ & $100.0$ & $100.0$ & $100.0$ & $100.0$ & $100.0$ \\
        8  & $99.21$ & $99.11$ & $100.00$ & $100.00$ & $99.00$ & $99.37$ & $99.96$ & $99.46$ \\
        12 & $93.36$ & $92.65$ & $94.61$ & $95.97$ & $94.91$ & $99.04$ & $96.24$ & $\mathbf{99.84}$ \\
        16 & $93.08$ & $90.06$ & $87.42$ & $89.91$ & $95.97$ & $98.16$ & $91.31$ & $\mathbf{98.26}$ \\
        \bottomrule
    \end{tabular*}
\end{table}

Table~\ref{tab:3body_bracket} shows a strikingly different pattern from Shakespeare: on bracket matching, three-body models with data-dependent fields ($h{+}J{+}K$ and $h{+}K$) substantially outperform the pairwise $h{+}J$ at longer sequences ($+4.93$\,pp and $+4.13$\,pp at $T{=}12$; $+2.29$\,pp and $+2.19$\,pp at $T{=}16$). This suggests that the combinatorial structure of nested bracket matching benefits from higher-order interactions in a way that natural language does not. As on Shakespeare, $h{+}K$ performs comparably to $h{+}J{+}K$, confirming that pairwise couplings become redundant when three-body couplings and data-dependent fields are both present.
\bibliographystyle{unsrt}
\bibliography{references}

@inproceedings{devlin2019bert,
  author       = {Jacob Devlin and
                  Ming{-}Wei Chang and
                  Kenton Lee and
                  Kristina Toutanova},
  title        = {{BERT:} Pre-training of Deep Bidirectional Transformers for Language
                  Understanding},
  booktitle    = {Proceedings of the 2019 Conference of the North American Chapter of
                  the Association for Computational Linguistics: Human Language Technologies,
                  {NAACL-HLT} 2019, Minneapolis, MN, USA},
  volume = {1},
  pages        = {4171--4186},
  publisher    = {Association for Computational Linguistics},
  year         = {2019},
  url          = {https://doi.org/10.18653/v1/n19-1423},
  doi          = {10.18653/v1/N19-1423},
}

@inproceedings{dosovitskiy2021image,
  author       = {Alexey Dosovitskiy and
                  Lucas Beyer and
                  Alexander Kolesnikov and
                  Dirk Weissenborn and
                  Xiaohua Zhai and
                  Thomas Unterthiner and
                  Mostafa Dehghani and
                  Matthias Minderer and
                  Georg Heigold and
                  Sylvain Gelly and
                  Jakob Uszkoreit and
                  Neil Houlsby},
  title        = {An Image is Worth 16x16 Words: Transformers for Image Recognition
                  at Scale},
  booktitle    = {9th International Conference on Learning Representations, Austria, May 3-7},
  year         = {2021},
  url          = {https://openreview.net/forum?id=YicbFdNTTy},
  doi          = {10.48550/arXiv.2010.11929},
}

@InProceedings{radford2021learning,
  title = 	 {Learning Transferable Visual Models From Natural Language Supervision},
  author =       {Radford, Alec and Kim, Jong Wook and Hallacy, Chris and Ramesh, Aditya and Goh, Gabriel and Agarwal, Sandhini and Sastry, Girish and Askell, Amanda and Mishkin, Pamela and Clark, Jack and Krueger, Gretchen and Sutskever, Ilya},
  booktitle = 	 {Proceedings of the 38th International Conference on Machine Learning},
  pages = 	 {8748--8763},
  year = 	 {2021},
  editor = 	 {Meila, Marina and Zhang, Tong},
  volume = 	 {139},
  series = 	 {Proceedings of Machine Learning Research},
  month = 	 {18--24 Jul},
  publisher =    {PMLR},
  url = 	 {https://proceedings.mlr.press/v139/radford21a.html},
}

@inproceedings{loshchilov2019decoupled,
  title     = {Decoupled Weight Decay Regularization},
  author    = {Loshchilov, Ilya and Hutter, Frank},
  booktitle = {International Conference on Learning Representations},
  year      = {2019},
  doi       = {10.48550/arXiv.1711.05101},
}

@inproceedings{vaswani2017attention,
  title     = {Attention is All You Need},
  author    = {Vaswani, Ashish and Shazeer, Noam and Parmar, Niki and Uszkoreit, Jakob and Jones, Llion and Gomez, Aidan N and Kaiser, {\L}ukasz and Polosukhin, Illia},
  booktitle = {Advances in Neural Information Processing Systems},
  volume    = {30},
  year      = {2017},
  doi       = {10.5555/3295222.3295349},
}

@inproceedings{ramsauer2020hopfield,
  title     = {Hopfield Networks is All You Need},
  author    = {Ramsauer, Hubert and Sch{\"a}fl, Bernhard and Lehner, Johannes and Seidl, Philipp and Widrich, Michael and Adler, Thomas and Gruber, Lukas and Holzleitner, Markus and Kreil, David P and Kopp, Michael K and Klambauer, G{\"u}nter and Brandstetter, Johannes and Hochreiter, Sepp},
  booktitle = {International Conference on Learning Representations},
  year      = {2021},
  doi       = {10.48550/arXiv.2008.02217},
}

@inproceedings{krotov2020large,
  title     = {Large Associative Memory Problem in Neurobiology and Machine Learning},
  author    = {Krotov, Dmitry and Hopfield, John J},
  booktitle = {International Conference on Learning Representations},
  year      = {2021},
  doi       = {10.48550/arXiv.2008.06996},
}

@article{ota2023attnbm,
  title   = {Attention in a Family of {B}oltzmann Machines Emerging from Modern {H}opfield Networks},
  author  = {Ota, Toshihiro and Karakida, Ryo},
  journal = {Neural Computation},
  volume  = {35},
  number  = {8},
  pages   = {1463--1480},
  year    = {2023},
  doi     = {10.1162/neco_a_01597},
}

@inproceedings{hoover2023energy,
  title     = {Energy Transformer},
  author    = {Hoover, Benjamin and Liang, Yuchen and Pham, Bao and Panda, Rameswar and Strobelt, Hendrik and Chau, Duen Horng and Zaki, Mohammed J and Krotov, Dmitry},
  booktitle = {Advances in Neural Information Processing Systems},
  year      = {2023},
  doi       = {10.5555/3666122.3667319},
}

@article{gyhm2024boltzmann,
  title   = {Boltzmann Sampling by Diabatic Quantum Annealing},
  author  = {Gyhm, Ju-Yeon and Kim, Gilhan and Kwon, Hyukjoon and Baek, Yongjoo},
  journal = {Physical Review E},
  volume  = {113},
  pages   = {065302},
  year    = {2026},
  doi     = {10.1103/9bc4-ddkc},
}

@article{kim2026diabatic,
  title   = {Diabatic Quantum Annealing for Training Energy-Based Generative Models},
  author  = {Kim, Gilhan and Gyhm, Ju-Yeon and Park, Daniel K},
  journal = {Physical Review E},
  volume  = {113},
  pages   = {035302},
  year    = {2026},
  doi     = {10.1103/2g6m-whm2},
}

@article{kim2026BMVAE,
  title   = {Multi-Mode Quantum Annealing for Generative Representation Learning with Boltzmann Priors},
  author  = {Kim, Gilhan and Park, Daniel K},
  journal = {arXiv preprint arXiv:2604.00919},
  year    = {2026},
  doi     = {10.48550/arXiv.2604.00919},
}

@article{du2025qama,
  title   = {{QAMA}: Scalable Quantum Annealing Multi-Head Attention Operator for Deep Learning},
  author  = {Du, Peng and Shi, Jinjing and Wang, Wenxuan and Ma, Yin and Wen, Kai and Li, Xuelong},
  journal = {arXiv preprint arXiv:2504.11083},
  year    = {2025},
  doi     = {10.48550/arXiv.2504.11083},
}

@misc{mcbal2023spin,
  title        = {Spin-Model Transformers},
  author       = {Bal, Matthias},
  year         = {2023},
  howpublished = {\url{https://mcbal.github.io/post/spin-model-transformers/}}
}

@article{ackley1985learning,
  title   = {A Learning Algorithm for {B}oltzmann Machines},
  author  = {Ackley, David H and Hinton, Geoffrey E and Sejnowski, Terrence J},
  journal = {Cognitive Science},
  volume  = {9},
  number  = {1},
  pages   = {147--169},
  year    = {1985},
  doi     = {10.1207/s15516709cog0901_7},
}

@article{hinton2002training,
  title   = {Training Products of Experts by Minimizing Contrastive Divergence},
  author  = {Hinton, Geoffrey E},
  journal = {Neural Computation},
  volume  = {14},
  number  = {8},
  pages   = {1771--1800},
  year    = {2002},
  doi     = {10.1162/089976602760128018},
}

@article{hukushima1996exchange,
  title   = {Exchange {M}onte {C}arlo Method and Application to Spin Glass Simulations},
  author  = {Hukushima, Koji and Nemoto, Kazuyuki},
  journal = {Journal of the Physical Society of Japan},
  volume  = {65},
  number  = {6},
  pages   = {1604--1608},
  year    = {1996},
  doi     = {10.1143/JPSJ.65.1604},
}

@inproceedings{katharopoulos2020transformers,
  title     = {Transformers are {RNN}s: Fast Autoregressive Transformers with Linear Attention},
  author    = {Katharopoulos, Angelos and Vyas, Apoorv and Pappas, Nikolaos and Fleuret, Fran{\c{c}}ois},
  booktitle = {International Conference on Machine Learning},
  year      = {2020},
  doi       = {10.5555/3524938.3525416},
}

@inproceedings{yao2021selfattention,
  title     = {Self-Attention Networks Can Process Bounded Hierarchical Languages},
  author    = {Yao, Shunyu and Peng, Binghui and Papadimitriou, Christos and Narasimhan, Karthik},
  booktitle = {Annual Meeting of the Association for Computational Linguistics},
  year      = {2021},
  doi       = {10.18653/v1/2021.acl-long.292},
}

@inproceedings{ramapuram2024sigmoid,
  title     = {Theory, Analysis, and Best Practices for Sigmoid Self-Attention},
  author    = {Ramapuram, Jason and Danieli, Federico and Dhekane, Eeshan Gunesh and Weers, Floris and Busbridge, Dan and Ablin, Pierre and Likhomanenko, Tatiana and Digani, Jagrit and Gu, Zijin and Shidani, Amitis and Webb, Russ},
  booktitle = {International Conference on Learning Representations},
  year      = {2025},
  doi       = {10.48550/arXiv.2409.04431},
}

@article{yan2025sigmoid,
  title   = {Sigmoid Self-Attention has Lower Sample Complexity than Softmax Self-Attention: A Mixture-of-Experts Perspective},
  author  = {Yan, Fanqi and Nguyen, Huy and Akbarian, Pedram and Ho, Nhat and Rinaldo, Alessandro},
  journal = {arXiv preprint arXiv:2502.00281},
  year    = {2025},
  doi     = {10.48550/arXiv.2502.00281},
}

@inproceedings{bahdanau2015neural,
  title     = {Neural Machine Translation by Jointly Learning to Align and Translate},
  author    = {Bahdanau, Dzmitry and Cho, Kyunghyun and Bengio, Yoshua},
  booktitle = {International Conference on Learning Representations},
  year      = {2015},
  doi       = {10.48550/arXiv.1409.0473},
}

@article{hopfield1982neural,
  title   = {Neural Networks and Physical Systems with Emergent Collective Computational Abilities},
  author  = {Hopfield, John J},
  journal = {Proceedings of the National Academy of Sciences},
  volume  = {79},
  number  = {8},
  pages   = {2554--2558},
  year    = {1982},
  doi     = {10.1073/pnas.79.8.2554},
}

@article{demircigil2017model,
  title   = {On a Model of Associative Memory with Huge Storage Capacity},
  author  = {Demircigil, Mete and Heusel, Judith and L{\"o}we, Matthias and Upgang, Sven and Vermet, Franck},
  journal = {Journal of Statistical Physics},
  volume  = {168},
  pages   = {288--299},
  year    = {2017},
  doi     = {10.1007/s10955-017-1806-y},
}

@inproceedings{kitaev2020reformer,
  title     = {Reformer: The Efficient Transformer},
  author    = {Kitaev, Nikita and Kaiser, {\L}ukasz and Levskaya, Anselm},
  booktitle = {International Conference on Learning Representations},
  year      = {2020},
  doi       = {10.48550/arXiv.2001.04451},
}

@inproceedings{choromanski2021rethinking,
  title     = {Rethinking Attention with Performers},
  author    = {Choromanski, Krzysztof and Likhosherstov, Valerii and Dohan, David and Song, Xingyou and Gane, Andreea and Sarlos, Tamas and Hawkins, Peter and Davis, Jared and Mohiuddin, Afroz and Kaiser, Lukasz and Belanger, David and Colwell, Lucy and Weller, Adrian},
  booktitle = {International Conference on Learning Representations},
  year      = {2021},
  doi       = {10.48550/arXiv.2009.14794},
}

@article{kadowaki1998quantum,
  title   = {Quantum Annealing in the Transverse {I}sing Model},
  author  = {Kadowaki, Tadashi and Nishimori, Hidetoshi},
  journal = {Physical Review E},
  volume  = {58},
  number  = {5},
  pages   = {5355--5363},
  year    = {1998},
  doi     = {10.1103/PhysRevE.58.5355},
}

@article{johnson2011quantum,
  title   = {Quantum Annealing with Manufactured Spins},
  author  = {Johnson, Mark W and Amin, Mohammad H S and Gildert, Suzanne and Lanting, Trevor and Hamze, Firas and Dickson, Neil and Harris, Richard and Berkley, Andrew J and Johansson, Jan and Bunyk, Paul and others},
  journal = {Nature},
  volume  = {473},
  number  = {7346},
  pages   = {194--198},
  year    = {2011},
  doi     = {10.1038/nature10012},
}

@article{child2019generating,
  title   = {Generating Long Sequences with Sparse Transformers},
  author  = {Child, Rewon and Gray, Scott and Radford, Alec and Sutskever, Ilya},
  journal = {arXiv preprint arXiv:1904.10509},
  year    = {2019},
  doi     = {10.48550/arXiv.1904.10509},
}

@article{poclopez2024dynamical,
  title   = {Dynamical Mean-Field Theory of Self-Attention Neural Networks},
  author  = {Poc-L{\'o}pez, {\'A}ngel and Aguilera, Miguel},
  journal = {arXiv preprint arXiv:2406.07247},
  year    = {2024},
  doi     = {10.48550/arXiv.2406.07247},
}

@techreport{radford2019language,
  title       = {Language Models are Unsupervised Multitask Learners},
  author      = {Radford, Alec and Wu, Jeffrey and Child, Rewon and Luan, David and Amodei, Dario and Sutskever, Ilya},
  institution = {OpenAI},
  year        = {2019}
}

@article{touvron2023llama,
  title   = {{LLaMA}: Open and Efficient Foundation Language Models},
  author  = {Touvron, Hugo and Lavril, Thibaut and Izacard, Gautier and Martinet, Xavier and Lachaux, Marie-Anne and Lacroix, Timoth{\'e}e and Rozi{\`e}re, Baptiste and Goyal, Naman and Hambro, Eric and Azhar, Faisal and others},
  journal = {arXiv preprint arXiv:2302.13971},
  year    = {2023},
  doi     = {10.48550/arXiv.2302.13971},
}

@book{baxter1982exactly,
  title     = {Exactly Solved Models in Statistical Mechanics},
  author    = {Baxter, Rodney J},
  year      = {1982},
  publisher = {Academic Press}
}

\end{document}